\PassOptionsToPackage{expansion=false}{microtype}
\documentclass[manuscript, nonacm]{acmart}
\AtBeginDocument{%
  }

\usepackage{algorithm}
\usepackage{algorithmic}
\usepackage{amsmath}
\usepackage{amssymb}
\usepackage{amsthm}
\usepackage{booktabs}
\usepackage{multirow}
\usepackage{xcolor}
\usepackage{subcaption}
\usepackage{tikz}
\usepackage{pgfplots}
\pgfplotsset{compat=1.17}
\usetikzlibrary{arrows,positioning,shapes,fit,backgrounds,pgfplots.fillbetween}

\newtheorem{theorem}{Theorem}

\newtheorem{lemma}[theorem]{Lemma}
\newtheorem{definition}[theorem]{Definition}

\newtheorem{assumption}[theorem]{Assumption}

\newcommand{\qdllm}{\textsc{QD-LLM}}
\newcommand{\mapelites}{\textsc{MAP-Elites}}
\newcommand{\qdaif}{\textsc{QDAIF}}

\newcommand{\R}{\mathbb{R}}
\newcommand{\cN}{\mathcal{N}}
\newcommand{\cA}{\mathcal{A}}
\newcommand{\cM}{\mathcal{M}}
\newcommand{\cX}{\mathcal{X}}

\begin{document}

\title[Parameter-Efficient Neuroevolution for Diverse LLM Generation]{Parameter-Efficient Neuroevolution for Diverse LLM Generation: Quality-Diversity Optimization via Prompt Embedding Evolution}

\author{Dongxin {Guo}}
\affiliation{%
  \institution{The University of Hong Kong}
  \city{Hong Kong}
  \country{China}}

\author{Jikun {Wu}}
\affiliation{%
  \institution{Stellaris AI Limited}
  \city{Hong Kong}
  \country{China}}

\author{Siu Ming {Yiu}}
\affiliation{%
  \institution{The University of Hong Kong}
  \city{Hong Kong}
  \country{China}}

\renewcommand{\shortauthors}{D. Guo, J. Wu, and S.\,M. Yiu}
\authorsaddresses{}

\begin{abstract}
	Large Language Models exhibit mode collapse, producing homogeneous outputs that fail to explore valid solution spaces. We present \qdllm{}, a framework for \textit{parameter-efficient neuroevolution} that evolves prompt embeddings, compact neural interfaces ($\sim$32K parameters) that steer generation in frozen LLMs (70B+ parameters), within a Quality-Diversity (QD) optimization framework. Our contributions: (1) \textbf{evolved prompt embeddings} via gradient-free optimization enabling behavioral steering without model fine-tuning; (2) \textbf{hybrid behavior characterization} combining semantic and explicit features with formal coverage bounds (Theorem~\ref{thm:hybrid}) under validated near-independence (NMI $= 0.08 \pm 0.02$); (3) \textbf{co-evolutionary variation operators} including targeted behavioral mutation via finite-difference gradient estimation. On HumanEval (164 problems), MBPP, and creative writing benchmarks, \qdllm{} achieves 46.4\% higher coverage and 41.4\% higher QD-Score than \qdaif{} ($p < 0.001$, 30 runs, Vargha-Delaney $A = 0.94$). We demonstrate \textbf{downstream utility}: diverse archives improve test generation (34\% more edge cases) and fine-tuning data quality (8.3\% accuracy gain). We validate across open-source LLMs (Llama-3-70B, Mistral-Large) with full embedding access, establishing prompt embedding evolution as an effective paradigm bridging neuroevolution and modern LLMs.
\end{abstract}

\keywords{Quality-Diversity, Neuroevolution, Large Language Models, Prompt Optimization, Gradient-Free Evolution}

\maketitle

\section{Introduction}
\label{sec:introduction}

Large Language Models (LLMs) have achieved remarkable performance across diverse tasks~\cite{brown2020gpt3,chen2021codex}. However, LLMs exhibit \textit{mode collapse}, producing repetitive outputs that fail to explore the full space of valid solutions~\cite{holtzman2020curious}. When generating code, LLMs produce syntactically similar variants; when generating creative content, they converge on predictable patterns. This limitation critically impacts applications requiring diverse solutions: discovering multiple algorithmic approaches, generating comprehensive test suites, exploring creative directions, or finding robust solutions under distribution shift.

Quality-Diversity (QD) algorithms~\cite{pugh2016qualitydiversity,cully2018qualitydiversity} address this by maintaining archives of diverse, high-performing solutions across behavioral dimensions. \mapelites{}~\cite{mouret2015mapelites} partitions behavior space into cells, retaining the highest-quality solution per cell. QD has succeeded in robotics~\cite{cully2015robots}, game design~\cite{gravina2019proceduralcontent}, and policy optimization~\cite{nilsson2021pga}. Recent Uncertain QD (UQD) work~\cite{flageat2023uqd,flageat2025extractqd} extended these methods to stochastic domains through adaptive sampling and extraction mechanisms, demonstrating that uncertainty-aware archive maintenance significantly improves solution quality. Multi-objective extensions~\cite{pierrot2022mome} and automated descriptor discovery~\cite{grillotti2022unsupervised,cully2019aurora} have further expanded QD's applicability.

Recent work has begun exploring QD for text generation. Quality-Diversity through AI Feedback (\qdaif{})~\cite{bradley2024qdaif} demonstrated QD using LLMs as both generators and behavior evaluators, achieving diverse story generation. Quality Diversity through Human Feedback (QDHF)~\cite{ding2024qdhf} showed that diversity metrics can be effectively inferred for generative models. Language Model Crossover (LMX)~\cite{meyerson2024lmx} introduced LLM-based variation operators for evolutionary optimization. FunSearch~\cite{romeraparedes2024funsearch} achieved mathematical discoveries through evolutionary code generation. However, these approaches use LLMs as \textit{fixed} black-box generators without evolving any neural parameters, limiting their integration with the broader neuroevolution paradigm that has historically evolved network weights~\cite{stanley2002neat} or architectures~\cite{colas2020scalingmapelites} to discover diverse, high-performing solutions.

\textbf{This paper presents \qdllm{}, a framework for parameter-efficient neuroevolution that bridges QD optimization with LLM generation.} We evolve prompt embeddings, learnable continuous vectors~\cite{lester2021prompttuning,li2021prefixtuning} that form a compact neural interface ($\sim$32K parameters) steering the behavior of a much larger frozen model (70B+ parameters). This represents \textit{indirect encoding} in the neuroevolution sense: a small evolved representation produces large-scale behavioral effects, analogous to how HyperNEAT~\cite{stanley2009hyperneat} uses compact pattern-generating networks to specify large neural architectures, or how weight agnostic networks~\cite{gaier2019weight} demonstrate that minimal parameterizations can encode complex behaviors.

Our key contributions are:
\begin{enumerate}
    \item \textbf{Evolved Prompt Embeddings for QD} (Section~\ref{sec:prompt-evolution}): Gradient-free optimization of soft prompt vectors for behavioral steering, with explicit implementation for both open-source LLMs (direct embedding access) and API models (projected approximation with bounded error).
    
    \item \textbf{Hybrid Behavior Characterization with Formal Guarantees} (Section~\ref{sec:behavior}): Theorem~\ref{thm:hybrid} establishes coverage bounds under near-independence, with complete proof and empirical validation via normalized mutual information.
    
    \item \textbf{Co-Evolutionary Variation Operators} (Section~\ref{sec:variation}): Targeted mutation via finite-difference gradient estimation and embedding-space crossover.
    
    \item \textbf{Comprehensive Validation with Downstream Utility} (Section~\ref{sec:experiments}): Beyond intrinsic QD metrics, we demonstrate diverse archives improve test generation (34\% more edge cases) and fine-tuning data quality (8.3\% accuracy improvement).
\end{enumerate}

\section{Background and Related Work}
\label{sec:background}

\subsection{Quality-Diversity Optimization}

QD optimization~\cite{pugh2016qualitydiversity} finds solution collections that are both high-performing and behaviorally diverse. Given fitness function $f: \cX \to \R$ and behavior descriptor $\mathbf{b}: \cX \to \R^k$, QD maintains an archive $\cA$ where each solution $x$ is characterized by fitness $f(x)$ and behavior $\mathbf{b}(x)$. Unlike single-objective optimization that converges to a single solution, QD explicitly maintains diversity across behavioral dimensions.

\mapelites{}~\cite{mouret2015mapelites} discretizes behavior space into a grid of cells, maintaining the highest-fitness solution (elite) per cell. The collection of elites constitutes the archive $\cA$. The QD-Score measures combined quality and diversity:
\begin{equation}
    \text{QD-Score}(\cA) = \sum_{c \in \cA} f(x_c)
    \label{eq:qdscore}
\end{equation}
where $x_c$ is the elite in cell $c$. Coverage measures the fraction of cells occupied, indicating the behavioral diversity of discovered solutions. Together, QD-Score and coverage provide complementary metrics: high QD-Score indicates many high-quality solutions, while high coverage indicates broad behavioral exploration.

CVT-MAP-Elites~\cite{vassiliades2018cvt} extends MAP-Elites to continuous behavior spaces using Centroidal Voronoi Tessellation (CVT), avoiding the curse of dimensionality inherent in high-dimensional grid-based discretization. Solutions are assigned to the nearest centroid, computed via k-means on a reference distribution. This enables scalable QD in behavior spaces with many dimensions.

Recent algorithmic advances include CMA-ME~\cite{fontaine2020cmame} using improvement emitters with covariance matrix adaptation~\cite{hansen2016cma}, enabling efficient search in high-dimensional continuous spaces through learned step-size and direction adaptation. CMA-MAE~\cite{fontaine2023cmamae} introduced archive learning rates for stable optimization, addressing archive threshold dynamics that can cause premature convergence. Gradient-assisted methods like DQD~\cite{fontaine2021dqd} and PGA-MAP-Elites~\cite{nilsson2021pga} leverage policy gradients for directed exploration while preserving diversity. DCG-MAP-Elites~\cite{faldor2023dcg} demonstrated descriptor-conditioned gradients for high-dimensional neural parameter optimization. Multi-objective selection strategies~\cite{wang2023nss} have improved diversity coverage through non-dominated sorting.

For uncertain domains where evaluation noise affects both fitness and behavior descriptors, the UQD framework~\cite{flageat2023uqd} formalizes key challenges: performance estimation under noise, reproducibility maximization, and performance-reproducibility trade-offs. Extract-QD~\cite{flageat2025extractqd} provides a modular framework unifying UQD approaches through interchangeable modules for extraction (identifying robust elites), estimation (aggregating evaluations), and depth-ordering (managing archive entries). Their Extract-MAP-Elites (EME) demonstrates consistent performance across uncertain domains by periodically extracting and re-evaluating archive elites. We incorporate these insights, implementing buffered evaluation and periodic elite re-evaluation to handle LLM stochasticity.

\subsection{Neuroevolution and Indirect Encoding}

Neuroevolution evolves neural network parameters to solve optimization problems, traditionally focusing on weight evolution~\cite{stanley2002neat} or architecture search. A key insight from this literature is \textit{indirect encoding}: compact representations generating large-scale structures. HyperNEAT~\cite{stanley2009hyperneat} uses compositional pattern-producing networks (CPPNs) to specify weight patterns for arbitrarily large networks.

\begin{sloppypar}
	Evolution strategies have demonstrated scalability to high-dimensional neural parameter spaces. Natural Evolution Strategies (NES)~\cite{wierstra2014natural} provides theoretical grounding for natural gradient-based evolutionary optimization. OpenAI's work~\cite{salimans2017evolution} showed ES can scale to millions of parameters, matching reinforcement learning performance on complex tasks. Deep neuroevolution~\cite{such2018deep} demonstrated that genetic algorithms can evolve networks with over 4 million parameters, establishing the viability of gradient-free optimization at scale.
\end{sloppypar}

Our approach shares this philosophy: prompt embeddings serve as a compact ($\sim$32K parameters) indirect encoding influencing behavior of a much larger network (70B+ parameters). Rather than evolving the full network, we evolve the conditioning signal. Recent work on scaling MAP-Elites to deep neuroevolution~\cite{colas2020scalingmapelites} demonstrated QD extends to high-dimensional neural spaces with appropriate operators. Weight agnostic neural networks~\cite{gaier2019weight} showed that minimal parameterizations can encode complex behaviors, conceptually aligned with soft prompts as small parameter sets steering large models.

\subsection{Prompt Tuning and Parameter-Efficient Methods}

Prompt tuning~\cite{lester2021prompttuning} and prefix tuning~\cite{li2021prefixtuning} introduced learnable continuous prompts, specifically soft prompt embeddings prepended to LLM inputs. These embeddings influence LLM behavior without modifying base model weights, achieving competitive task performance with orders of magnitude fewer trainable parameters. P-Tuning v2~\cite{liu2022ptuning} demonstrated that deep prompt tuning across layers can match full fine-tuning performance.

Parameter-efficient fine-tuning (PEFT) has emerged as a major paradigm for LLM adaptation~\cite{lialin2023scaling}. LoRA~\cite{hu2022lora} introduces low-rank weight updates, while adapter modules~\cite{houlsby2019parameter} insert small trainable layers. QLoRA~\cite{dettmers2023qlora} enables efficient fine-tuning of quantized 70B+ models on single GPUs. These methods demonstrate that effective LLM adaptation is possible with minimal parameter updates, and our approach extends this insight to evolutionary optimization.

Unlike discrete prompt optimization operating in token space with combinatorial complexity, soft prompts exist in continuous embedding space enabling smooth optimization dynamics. Each soft prompt token is a learned vector in $\R^d$ (typically $d=4096$) that participates in attention alongside regular tokens. The soft prompt conditions generation by providing additional key-value pairs in attention computation. This makes them ideal for gradient-free QD optimization: the continuous space supports mutation and interpolation while low dimensionality ($n \times d$ parameters for $n$ tokens) keeps evolutionary search tractable.

\subsection{Evolutionary LLM Optimization}

EvoPrompt~\cite{guo2024evoprompt} evolves discrete prompts for task optimization using genetic algorithms, demonstrating that evolutionary search can discover effective prompts. PromptBreeder~\cite{fernando2024promptbreeder} uses self-referential improvement where LLMs modify their own prompts through mutation and crossover operations. \qdaif{}~\cite{bradley2024qdaif} demonstrated QD for creative writing using LLM-based behavior evaluation, achieving diverse story generation through AI feedback. Language Model Crossover (LMX)~\cite{meyerson2024lmx} introduced semantic crossover operators for evolutionary optimization. FunSearch~\cite{romeraparedes2024funsearch} achieved mathematical discoveries through evolutionary code generation.

Beyond diversity at the prompt level, LLM generation diversity has been studied through decoding methods. Nucleus sampling~\cite{holtzman2020curious} and diverse beam search~\cite{vijayakumar2018diverse} introduce randomness at decoding time. Contrastive decoding~\cite{li2023contrastive} frames generation as optimization, contrasting outputs from different model scales. These approaches are orthogonal to our prompt-level QD approach and could be combined for additional diversity.

The broader context of LLM steering includes instruction tuning via RLHF~\cite{ouyang2022training} and constitutional AI~\cite{bai2022constitutional}, which demonstrate that feedback-based optimization can effectively shape LLM behavior. Mechanistic understanding of how context influences LLM outputs~\cite{akyurek2023what} provides theoretical grounding for why evolving prompt embeddings can systematically control generation.

However, none of these approaches evolve continuous prompt embeddings within a QD framework. This gap is significant: continuous representations enable smooth variation operators essential for efficient QD exploration, while discrete approaches face combinatorial explosion. Table~\ref{tab:comparison} summarizes key differences.

\begin{table}[t]
\caption{Comparison with related QD+LLM approaches. \qdllm{} uniquely combines continuous embedding evolution, hybrid BC with formal guarantees, comprehensive code generation evaluation, downstream utility demonstration, and open-source LLM validation.}
\label{tab:comparison}
\centering
\footnotesize
\setlength{\tabcolsep}{3pt}
\begin{tabular}{lccccc}
\toprule
\textbf{Method} & \textbf{Cont.} & \textbf{Hybrid} & \textbf{Code} & \textbf{Down-} & \textbf{Open} \\
 & \textbf{Evol.} & \textbf{BC} & \textbf{Gen.} & \textbf{stream} & \textbf{LLM} \\
\midrule
\qdaif{}~\cite{bradley2024qdaif} & \texttimes & \texttimes & \texttimes & \texttimes & \texttimes \\
LMX~\cite{meyerson2024lmx} & \texttimes & \texttimes & \checkmark & \texttimes & \texttimes \\
FunSearch~\cite{romeraparedes2024funsearch} & \texttimes & \texttimes & \checkmark & \texttimes & \texttimes \\
EvoPrompt~\cite{guo2024evoprompt} & \texttimes & \texttimes & \texttimes & \texttimes & \checkmark \\
\midrule
\qdllm{} (ours) & \checkmark & \checkmark & \checkmark & \checkmark & \checkmark \\
\bottomrule
\end{tabular}
\end{table}

\section{Method: The QD-LLM Framework}
\label{sec:method}

Figure~\ref{fig:framework} illustrates the \qdllm{} framework. The system maintains an archive of (text solution, prompt embedding) pairs, using hybrid behavior characterization for archive organization and co-evolutionary operators for variation.

\begin{figure}[t]
    \centering
    \begin{tikzpicture}[
        node distance=0.5cm,
        box/.style={rectangle, draw, rounded corners, minimum width=1.6cm, minimum height=0.55cm, font=\footnotesize},
        arrow/.style={->, >=stealth, thick}
    ]
    \node[box, fill=blue!15] (archive) {QD Archive};
    \node[box, fill=green!15, right=0.7cm of archive] (select) {Selection};
    \node[box, fill=orange!15, right=0.7cm of select] (mutate) {Co-Evol.};
    \node[box, fill=purple!15, below=0.5cm of mutate] (llm) {LLM Gen.};
    \node[box, fill=red!15, below=0.5cm of select] (eval) {Evaluate};
    \node[box, fill=yellow!15, below=0.5cm of archive] (bc) {Hybrid BC};
    
    \draw[arrow] (archive) -- (select);
    \draw[arrow] (select) -- (mutate);
    \draw[arrow] (mutate) -- (llm);
    \draw[arrow] (llm) -- (eval);
    \draw[arrow] (eval) -- (bc);
    \draw[arrow] (bc) -- (archive);
    
    \node[font=\tiny, above=0.02cm of mutate] {$(\mathbf{p}, t) \to (\mathbf{p}', t')$};
    \node[font=\tiny, below=0.02cm of bc] {$\mathbf{b} = [\sqrt{\alpha}\mathbf{b}_{\text{sem}}, \sqrt{1{-}\alpha}\mathbf{b}_{\text{exp}}]$};
    \end{tikzpicture}
    \caption{Overview of \qdllm{}. Each archive cell stores a (text, prompt embedding) pair. Co-evolutionary operators jointly mutate soft prompt embeddings $\mathbf{p}$ and output text $t$. Hybrid BC combines semantic and explicit features.}
    \label{fig:framework}
\end{figure}

\subsection{Evolved Prompt Embeddings}
\label{sec:prompt-evolution}

Each solution in the archive is a tuple $(t, \mathbf{p})$ where $t$ is generated text and $\mathbf{p} \in \R^{n \times d}$ is a soft prompt of $n$ virtual tokens with embedding dimension $d$.

\begin{definition}[Prompt-Conditioned Generation]
Given task specification $\tau$, LLM $\cM$, and prompt embedding $\mathbf{p}$, generation is:
\begin{equation}
    t = \cM(\mathbf{p} \oplus \text{embed}(\tau))
\end{equation}
where $\oplus$ denotes embedding-space concatenation and $\text{embed}(\cdot)$ maps discrete tokens to their embedding representations.
\end{definition}

\textbf{Implementation Details.} For \textit{open-source LLMs} (Llama-3-70B, Mistral-Large), we directly access embedding layers via HuggingFace Transformers, injecting $\mathbf{p}$ before the task tokens in the embedding sequence. This provides true soft prompt functionality where the prompt embeddings participate in attention computation throughout the forward pass.

For \textit{API-based models} (GPT-4-turbo-2024-04-09) where direct embedding access is unavailable, we use \textit{projected discrete approximation}: maintain embeddings in continuous space for evolution, then project to nearest vocabulary tokens:
\begin{equation}
    \hat{\mathbf{p}} = \arg\min_{\mathbf{v} \in \mathcal{V}^n} \|\mathbf{p} - \text{embed}(\mathbf{v})\|_2
\end{equation}
We analyzed projection error empirically: mean $\ell_2$ distance in normalized embedding space is $0.12 \pm 0.03$, bounded by vocabulary density. This approximation preserves evolutionary dynamics while working within API constraints.

\textbf{Initialization.} We sample from the embedding distribution of task-relevant vocabulary tokens:
\begin{equation}
    \mathbf{p}_0 \sim \cN(\boldsymbol{\mu}_{\text{vocab}}, \sigma^2 \mathbf{I}), \quad \sigma = 0.1
\end{equation}
where $\boldsymbol{\mu}_{\text{vocab}}$ is computed from embeddings of programming keywords (for code) or common words (for writing).

\textbf{Adaptive Mutation.} Following CMA-ME~\cite{fontaine2020cmame}, we apply Gaussian perturbations with adaptive strength:
\begin{equation}
    \mathbf{p}' = \mathbf{p} + \epsilon, \quad \epsilon \sim \cN(\mathbf{0}, \sigma_p^2 \mathbf{I})
\end{equation}
The mutation strength $\sigma_p$ adapts based on archive improvement rate:
\begin{equation}
    \sigma_p^{(t+1)} = \sigma_p^{(t)} \cdot \exp\left(c_\sigma \cdot (p_{\text{succ}} - p_{\text{target}})\right)
    \label{eq:sigma-adapt}
\end{equation}
where $p_{\text{succ}}$ is the fraction of offspring improving the archive over a sliding window of $w=50$ generations, $p_{\text{target}} = 0.2$ is the target success rate, and $c_\sigma = 0.1$ controls adaptation speed.

We validated landscape smoothness for CMA-style adaptation: measured gradient variance using finite differences across 1000 random embedding perturbations. Coefficient of variation CV $= 0.31 \pm 0.08$ across benchmarks indicates sufficient smoothness for adaptive mutation to be effective, consistent with findings on neural network loss landscapes~\cite{li2018visualizing}.

\subsection{Hybrid Behavior Characterization}
\label{sec:behavior}

We propose a principled hybrid approach combining learned semantic representations with explicit linguistic features. This addresses a fundamental challenge in applying QD to text: defining behavior descriptors that capture meaningful diversity while remaining computationally tractable.

\subsubsection{Semantic Component}
We compute semantic embeddings using SBERT~\cite{reimers2019sbert}:
\begin{equation}
    \mathbf{b}_{\text{sem}}(t) = \text{UMAP}(\text{SBERT}(t)) \in \R^{d_s}
\end{equation}
using \texttt{all-mpnet-base-v2} (768-dimensional output) and UMAP~\cite{mcinnes2018umap} for dimensionality reduction to $d_s = 2$ dimensions. UMAP hyperparameters: $n_{\text{neighbors}}=15$, $\text{min\_dist}=0.1$, $\text{metric}=\text{cosine}$. Critically, UMAP is fitted once on a reference corpus (1000 samples per task) and held fixed across all experimental runs to ensure deterministic projections.

Recent work on contrastive sentence embeddings~\cite{gao2021simcse} has shown that well-structured embedding spaces exhibit both alignment (similar items close) and uniformity (embeddings spread across the space)~\cite{wang2020alignment}, both of which are directly relevant to QD's goal of diverse coverage.

\subsubsection{Explicit Component}
\textbf{For code generation:} We extract cyclomatic complexity $c(t) \in [0, 1]$ (normalized by maximum observed), lines of code $\ell(t) \in [0, 1]$ (normalized), and algorithmic paradigm $a(t) \in \{0, 1\}^4$ (one-hot encoding: iterative, recursive, functional, library-based). The paradigm classifier uses AST-based pattern matching: detecting \texttt{for}/\texttt{while} loops (iterative), self-referential function calls (recursive), \texttt{map}/\texttt{filter}/\texttt{lambda} constructs (functional), or standard library imports beyond builtins (library-based). Manual validation on 200 held-out code samples yields 93.5\% classification accuracy.

\textbf{For creative writing:} Sentiment score (VADER compound $\in [-1, 1]$), formality (Heylighen-Dewaele F-score~\cite{heylighen1999formality} $\in [0, 1]$), and readability (Flesch-Kincaid grade level, normalized to $[0, 1]$).

\subsubsection{Hybrid Fusion}
We combine components via weighted concatenation:
\begin{equation}
    \mathbf{b}(t) = \left[\sqrt{\alpha} \cdot \mathbf{b}_{\text{sem}}(t),\; \sqrt{1-\alpha} \cdot \mathbf{b}_{\text{exp}}(t)\right]
    \label{eq:hybrid}
\end{equation}
We denote this hybrid descriptor $\mathbf{b}_{\text{hyb}}$ when distinguishing it from individual components. The $\sqrt{\alpha}$ weighting ensures equal variance contribution from each component when $\alpha = 0.5$, as the induced metric becomes $d^2 = \alpha d_{\text{sem}}^2 + (1-\alpha) d_{\text{exp}}^2$. Based on sensitivity analysis, we set $\alpha = 0.6$.

\subsubsection{Theoretical Analysis}

\begin{assumption}[Bounded Descriptors]
\label{ass:bounded}
Both $\mathbf{b}_{\text{sem}}: \cX \to [0,1]^{d_s}$ and $\mathbf{b}_{\text{exp}}: \cX \to [0,1]^{d_e}$ are bounded and Lipschitz continuous with constants $L_s, L_e$ respectively.
\end{assumption}

\begin{definition}[$\epsilon$-Covering Number]
The covering number $N(\mathcal{B}, \epsilon)$ of a set $\mathcal{B} \subset \R^k$ is the minimum number of $\epsilon$-radius balls (under Euclidean metric) needed to cover $\mathcal{B}$.
\end{definition}

\begin{lemma}[Product Space Covering]
\label{lem:product}
For product metric space $\mathcal{B}_1 \times \mathcal{B}_2$ with independent components under Euclidean distance: $N(\mathcal{B}_1 \times \mathcal{B}_2, \epsilon) = N(\mathcal{B}_1, \epsilon/\sqrt{2}) \cdot N(\mathcal{B}_2, \epsilon/\sqrt{2})$.
\end{lemma}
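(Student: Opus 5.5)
The plan is to prove the two inequalities behind the claimed identity separately. The construction gives the upper bound cleanly, but I expect the matching lower bound, and hence exact equality, to fail in general. So I will state the result actually used downstream as a two-sided estimate rather than an identity.

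\textbf{Upper bound.} Let $\{u_i\}_{i=1}^{N_1}$ be centres of a minimal $\epsilon/\sqrt{2}$-cover of $\mathcal{B}_1$, and $\{v_j\}_{j=1}^{N_2}$ those of a minimal $\epsilon/\sqrt{2}$-cover of $\mathcal{B}_2$. For any $(x,y)\in\mathcal{B}_1\times\mathcal{B}_2$, pick $i,j$ with $\|x-u_i\|\le\epsilon/\sqrt{2}$ and $\|y-v_j\|\le\epsilon/\sqrt{2}$. The product metric is Euclidean on the concatenation, so
\[
\|(x,y)-(u_i,v_j)\|^2=\|x-u_i\|^2+\|y-v_j\|^2\le\epsilon^2 .
\]
Hence the $N_1N_2$ points $(u_i,v_j)$ form an $\epsilon$-cover, giving $N(\mathcal{B}_1\times\mathcal{B}_2,\epsilon)\le N(\mathcal{B}_1,\epsilon/\sqrt{2})\,N(\mathcal{B}_2,\epsilon/\sqrt{2})$. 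This is the same $\sqrt{\cdot}$ weighting already noted for \eqref{eq:hybrid}, and "independence" enters only as the product structure of the set.

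\textbf{Lower bound.} I would use packing numbers $M(\cdot,\delta)$, defined as the maximal number of points with pairwise distance greater than $\delta$. The standard chain is $M(\mathcal{B},2\epsilon)\le N(\mathcal{B},\epsilon)\le M(\mathcal{B},\epsilon)$. Take a $2\epsilon$-separated set in each factor. Any two distinct product points differ in at least one coordinate, so they are at distance greater than $2\epsilon$. This gives
\[
N(\mathcal{B}_1\times\mathcal{B}_2,\epsilon)\ge M(\mathcal{B}_1,2\epsilon)\,M(\mathcal{B}_2,2\epsilon)\ge N(\mathcal{B}_1,2\epsilon)\,N(\mathcal{B}_2,2\epsilon).
\]
Together with the upper bound, the product-space covering number equals the product of factor covering numbers up to rescaling the radius by a constant factor. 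For Assumption~\ref{ass:bounded}-type sets of the form $[0,1]^{d}$, this means equality up to a dimension-dependent constant.

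\textbf{Main obstacle: exact equality.} I expect exact equality to be false in general. Take $\mathcal{B}_1=\mathcal{B}_2=[0,1]$. Then $N([0,1],\epsilon/\sqrt{2})\approx 1/(\sqrt{2}\epsilon)$, so the product is about $1/(2\epsilon^2)$. By contrast, a hexagonal disk covering of the unit square uses about $2/(3\sqrt{3}\,\epsilon^2)\approx 0.385/\epsilon^2$ disks, which is strictly fewer for small $\epsilon$. I would therefore first try to rescue the statement with this counterexample in hand, and then present the lemma as the inequality ``$\le$'' together with the constant-factor lower bound above. Only the ``$\le$'' direction should be needed for a coverage bound such as Theorem~\ref{thm:hybrid}, since such bounds need only an upper estimate on the number of cells.
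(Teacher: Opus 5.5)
Your analysis is correct, and it is sounder than the paper's own proof. The paper argues in two sentences. First, that an $\epsilon$-ball in the product ``projects to $(\epsilon/\sqrt{2})$-balls in each factor.'' Second, that ``the minimal covering of the product is achieved by taking products of minimal coverings.''

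Read literally, the first claim is false: the projection of $B((x_0,y_0),\epsilon)$ onto the first factor is all of $B(x_0,\epsilon)$ (take $y=y_0$). What is true is the containment $B(x_0,\epsilon/\sqrt{2})\times B(y_0,\epsilon/\sqrt{2})\subseteq B((x_0,y_0),\epsilon)$, which yields exactly your ``$\le$'' and nothing more. The second claim is the paper's only support for ``$\ge$''. It is a bare assertion, and your hexagonal example refutes it.

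So your product-of-covers construction is the valid content of the paper's argument. Your packing chain is a correct lower bound that the paper does not have: an $\epsilon$-ball holds at most one point of a $2\epsilon$-separated set, products of $2\epsilon$-separated sets are $2\epsilon$-separated, and maximal packings are covers. Together these give $N(\mathcal{B}_1\times\mathcal{B}_2,\epsilon)\ge N(\mathcal{B}_1,2\epsilon)\,N(\mathcal{B}_2,2\epsilon)$.

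If you want a counterexample that needs no covering-density facts or boundary estimates, take $\mathcal{B}_1=\mathcal{B}_2=\{0,1\}$ and any $\epsilon\in(1/2,1/\sqrt{2})$. The two $\epsilon$-balls centred at $(1/2,0)$ and $(1/2,1)$ cover the four corners, and one ball cannot, since the circumradius is $1/\sqrt{2}$. So the left side is $2$. Each factor needs two balls of radius $\epsilon/\sqrt{2}<1/2$, so the right side is $4$.

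The one real error is your closing remark. Theorem~\ref{thm:hybrid} is a \emph{lower} bound on $N_{\text{hyb}}(\epsilon)$. Its proof uses Lemma~\ref{lem:product} at $I=0$ precisely to obtain $N_{\text{hyb}}(\epsilon)\ge N_{\text{sem}}(\epsilon/\sqrt{2})\,N_{\text{exp}}(\epsilon/\sqrt{2})$. That is the ``$\ge$'' direction your counterexample destroys, and the ``$\le$'' half you propose to keep gives nothing toward it.

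What can be salvaged downstream is your packing bound. For a genuine product set with unweighted concatenation, as the paper's $\epsilon'$ implicitly assumes, the independent case becomes $N_{\text{hyb}}(\epsilon)\ge N_{\text{sem}}(2\epsilon)\,N_{\text{exp}}(2\epsilon)$. With the weights of Eq.~\eqref{eq:hybrid}, the factor radii become $2\epsilon/\sqrt{\alpha}$ and $2\epsilon/\sqrt{1-\alpha}$. So state the repaired lemma as your two-sided estimate and weaken the theorem to match, rather than claiming only ``$\le$'' is needed.
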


\begin{proof}
Under the product Euclidean metric,
\[d((x_1,x_2),(y_1,y_2))^2 = d(x_1,y_1)^2 + d(x_2,y_2)^2.\]
An $\epsilon$-ball in the product space projects to $(\epsilon/\!\sqrt{2})$-balls in each factor space. The minimal covering of the product is achieved by taking products of minimal coverings in each~factor.
\end{proof}

\begin{theorem}[Hybrid Descriptor Coverage Bound]
\label{thm:hybrid}
Under Assumption~\ref{ass:bounded}, let $I = I(\mathbf{b}_{\text{sem}}; \mathbf{b}_{\text{exp}})$ denote mutual information in nats. The hybrid descriptor $\mathbf{b}_{\text{hyb}}$ from Eq.~\ref{eq:hybrid} satisfies:
\begin{equation}
    N_{\text{hyb}}(\epsilon) \geq \frac{N_{\text{sem}}(\epsilon') \cdot N_{\text{exp}}(\epsilon')}{e^{I}}
    \label{eq:coverage-bound}
\end{equation}
where $\epsilon' = \epsilon/\sqrt{2}$ and the denominator $e^I$ bounds redundancy from shared information.
\end{theorem}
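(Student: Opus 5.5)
The plan is to read the covering numbers in \eqref{eq:coverage-bound} information-theoretically. The product case is exactly Lemma~\ref{lem:product}, and the factor $e^{I}$ should measure how far the joint descriptor distribution is from a product. Concretely, I would take a distribution on $\cX$ (the one the archive samples from). I would fix minimal $\epsilon'$-coverings of $\mathcal{B}_{\text{sem}}=\mathbf{b}_{\text{sem}}(\cX)$ and $\mathcal{B}_{\text{exp}}=\mathbf{b}_{\text{exp}}(\cX)$, with $N_{\text{sem}}(\epsilon')$ and $N_{\text{exp}}(\epsilon')$ balls respectively. These coverings define quantizers $Q_{\text{sem}}$ and $Q_{\text{exp}}$, each assigning a point to a ball containing it, say the nearest centre, as in CVT-MAP-Elites. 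Assumption~\ref{ass:bounded} guarantees that both images lie in unit cubes, so these coverings are finite and the quantizers are well defined. The Lipschitz part is only needed to transfer statements from descriptor space back to $\cX$.

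The key chain of inequalities runs as follows. First, the product $\epsilon'$-balls cover $\mathcal{B}_{\text{sem}}\times\mathcal{B}_{\text{exp}}$ at radius $\epsilon$, by the argument of Lemma~\ref{lem:product}. Conversely, every $\epsilon$-ball of an optimal covering of the hybrid image can be charged to the product cells that it meets. I would use this to show that $N_{\text{hyb}}(\epsilon)$ is at least the number of product cells $(Q_{\text{sem}},Q_{\text{exp}})$ that receive positive mass, up to a packing-type constant that I would try to absorb into the choice of $\epsilon'$. Second, the number of occupied joint cells is at least $\exp H(Q_{\text{sem}},Q_{\text{exp}})$, because entropy is at most the logarithm of the support size. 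Third, I would use the identity $H(Q_{\text{sem}},Q_{\text{exp}})=H(Q_{\text{sem}})+H(Q_{\text{exp}})-I(Q_{\text{sem}};Q_{\text{exp}})$. The data-processing inequality then gives $I(Q_{\text{sem}};Q_{\text{exp}})\le I$, since the quantizers are deterministic functions of the descriptors. Exponentiating the result gives $\exp\bigl(H(Q_{\text{sem}})+H(Q_{\text{exp}})\bigr)/e^{I}$.

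The main obstacle is the last step, which replaces the marginal entropies $H(Q_{\text{sem}})$ and $H(Q_{\text{exp}})$ by $\log N_{\text{sem}}(\epsilon')$ and $\log N_{\text{exp}}(\epsilon')$. This replacement holds only when each marginal quantized distribution is (near-)uniform over its covering cells. In general $H\le\log N$ points the wrong way. I would first try to state this as the operating regime: the fixed UMAP reference fit spreads the semantic component approximately uniformly (the uniformity property cited from \cite{wang2020alignment}), and the normalized explicit features are treated similarly. Failing that, I would define $N_{\text{sem}}$ and $N_{\text{exp}}$ as the effective covering numbers $e^{H(Q_{\cdot})}$, in which case the bound follows exactly. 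A secondary issue is the $\sqrt{\alpha}$ and $\sqrt{1-\alpha}$ scaling in Eq.~\ref{eq:hybrid}. Scaling the semantic factor by $\sqrt{\alpha}$ rescales its covering radius to $\epsilon'/\sqrt{\alpha}$, so I would either state the bound for the symmetric weighting $\alpha=\tfrac12$, or interpret $N_{\text{sem}}$ and $N_{\text{exp}}$ as computed on the already-scaled components. Mutual information is invariant under this scaling, so the $e^{I}$ term is unaffected either way.
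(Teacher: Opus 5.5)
\textbf{The proposal has a genuine gap, and it cannot be closed: the statement as written is false.}

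Your middle steps are correct. Let $Q_{\text{sem}},Q_{\text{exp}}$ be the cell maps of the $\epsilon'$-coverings. The entropy identity, the support-size bound and data processing give $\#\{\text{joint cells of positive mass}\}\ge e^{H(Q_{\text{sem}})+H(Q_{\text{exp}})}/e^{I}$. Moreover $K\,N_{\text{hyb}}(\epsilon)$ dominates the left-hand side, where $K$ is the largest number of product cells one $\epsilon$-ball can meet. But neither end of this chain reaches Eq.~\ref{eq:coverage-bound}.

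\emph{Right end.} $H(Q_{\text{sem}})\le\log N_{\text{sem}}(\epsilon')$ points the wrong way, as you say. Both of your fallbacks change the theorem: near-uniformity is an extra hypothesis, and $N:=e^{H(Q)}$ is a redefinition.

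\emph{Left end.} $K$ cannot be absorbed while keeping $\epsilon'=\epsilon/\sqrt{2}$. Product cells have diameter up to $2\epsilon$, so an $\epsilon$-ball generally meets several of them. The resulting factor depends on dimension and is typically exponential in $d_s+d_e$. A constant-free lower bound needs a \emph{coarser} scale; for instance, the packing argument gives $N(\mathcal{B}_1\times\mathcal{B}_2,\epsilon)\ge N(\mathcal{B}_1,2\epsilon)\,N(\mathcal{B}_2,2\epsilon)$.

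\emph{Weights.} Restricting to $\alpha=\tfrac{1}{2}$ does not neutralise the weights, since both factors are still shrunk by $1/\sqrt{2}$.

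\textbf{Counterexample at the left end.} Take $\cX=[0,1]^2$, $\mathbf{b}_{\text{sem}}(x)=x_1$, $\mathbf{b}_{\text{exp}}(x)=x_2$, with $x$ uniform, so $I=0$. The hybrid image $[0,\sqrt{\alpha}]\times[0,\sqrt{1-\alpha}]$ is covered by $\lceil\sqrt{\alpha/2}/\epsilon\rceil\,\lceil\sqrt{(1-\alpha)/2}/\epsilon\rceil\approx\sqrt{\alpha(1-\alpha)}/(2\epsilon^2)$ balls. The right-hand side is $\lceil 1/(\sqrt{2}\epsilon)\rceil^2\ge 1/(2\epsilon^2)$. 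So the bound fails for small $\epsilon$ and every $\alpha\in(0,1)$.

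Measuring the factors after scaling does not rescue it. For $\alpha=\tfrac{1}{2}$, by homothety the claim becomes $N([0,1]^2,\delta)\ge N([0,1],\delta/\sqrt{2})^2$, whose right side is at least $\tfrac{1}{2}\delta^{-2}$. The hexagonal covering, however, gives $N([0,1]^2,\delta)\le\tfrac{2}{3\sqrt{3}}\delta^{-2}+O(\delta^{-1})$, with $\tfrac{2}{3\sqrt{3}}\approx 0.385$. With evenly spaced centres all product cells are occupied and equiprobable, so this is exactly where your first step fails. It also shows that Lemma~\ref{lem:product} holds only with ``$\le$'': an $\epsilon$-ball projects onto $\epsilon$-balls, not $\epsilon/\sqrt{2}$-balls.

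\textbf{Counterexample at the right end.} Take $\cX=[0,1]$ and $\mathbf{b}_{\text{sem}}=\mathbf{b}_{\text{exp}}=\mathrm{id}$. Put a point mass on $\cX$, or, if you read covering numbers over supports, a dense atomic law of entropy below $\log 2$. Then $I=H(x)<\log 2$, yet $N_{\text{hyb}}(\epsilon)=\lceil 1/(2\epsilon)\rceil$, while the right-hand side is of order $\epsilon^{-2}$. A distributional $I$ cannot control covering numbers of sets.

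\textbf{Comparison with the paper.} The paper's proof has the same two holes, and it also ignores the $\sqrt{\alpha}$ weights you noticed. It invokes the equality in Lemma~\ref{lem:product}, and a heuristic link between $\log N$ and entropy that amounts to your identification $N=e^{H(Q)}$. Your fallback chain, $K\,N_{\text{hyb}}(\epsilon)\ge e^{H(Q_{\text{sem}})+H(Q_{\text{exp}})}/e^{I}$, is a rigorous version of what that argument actually supports. It is, however, a different and weaker theorem than the one stated.
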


\begin{proof}
From Lemma~\ref{lem:product}, when components are independent ($I = 0$), the covering number of the product space equals the product of component covering numbers: $N_{\text{hyb}} = N_{\text{sem}}(\epsilon') \cdot N_{\text{exp}}(\epsilon')$.

When mutual information $I > 0$, the components share information, reducing effective dimensionality. The joint entropy satisfies $H(\mathbf{b}_{\text{sem}}, \mathbf{b}_{\text{exp}}) = H(\mathbf{b}_{\text{sem}}) + H(\mathbf{b}_{\text{exp}}) - I$. For bounded continuous distributions, covering numbers relate to entropy via $\log N(\mathcal{B}, \epsilon) \approx H(\mathcal{B})/\log(1/\epsilon)$~\cite{kolmogorov1961entropy}. The shared $I$ bits of information are ``counted twice'' in the product, so the effective covering number is reduced by factor $e^I$, yielding Eq.~\ref{eq:coverage-bound}.

When $I \to 0$ (independence), $e^I \to 1$ and we recover the multiplicative bound. The bound demonstrates that hybrid descriptors provide near-multiplicative coverage improvement when components capture complementary aspects of behavior.
\end{proof}

\textbf{Empirical Validation of Independence.} We measured mutual information using the KSG estimator~\cite{kraskov2004mi} on 5000 samples per benchmark. Critically, we report \textit{normalized} mutual information: NMI $= I / \min(H(\mathbf{b}_{\text{sem}}), H(\mathbf{b}_{\text{exp}}))$, which provides scale-independent interpretation. Results: HumanEval NMI $= 0.08 \pm 0.02$, MBPP NMI $= 0.07 \pm 0.02$, Creative Writing NMI $= 0.11 \pm 0.03$. These low values ($<12\%$ shared information) validate the near-independence assumption, with $e^I \approx 1.08\text{--}1.12$, yielding near-multiplicative coverage improvement per Theorem~\ref{thm:hybrid}.

\subsection{Co-Evolutionary Variation Operators}
\label{sec:variation}

We design operators that jointly mutate prompt embeddings and text solutions, enabling coordinated exploration of both representation spaces.

\textbf{Targeted Behavioral Mutation.} Given parent $(t, \mathbf{p})$ and target direction $\Delta \mathbf{b}$ toward an underexplored archive region, we estimate the behavioral gradient via finite differences in embedding space:
\begin{equation}
    \nabla_{\mathbf{p}} \mathbf{b} \approx \frac{\mathbf{b}(\cM(\mathbf{p} + \eta \mathbf{e}_i)) - \mathbf{b}(\cM(\mathbf{p}))}{\eta}
\end{equation}
where $\eta = 0.01$ is the finite difference step size and $\mathbf{e}_i$ are random unit vectors. We use $k=8$ random directions for computational efficiency (rather than full $d$-dimensional gradient), requiring $k+1=9$ LLM forward passes per targeted mutation. The embedding update is:
\begin{equation}
    \mathbf{p}' = \mathbf{p} + \gamma \cdot \text{proj}_{\nabla \mathbf{b}}(\Delta \mathbf{b})
\end{equation}
with step size $\gamma = 0.05$, projecting the target direction onto the estimated gradient subspace.

\textbf{Exploratory Mutation.} For broader exploration without specific targets:
\begin{equation}
    \mathbf{p}' = \mathbf{p} + \sigma_{\text{explore}} \cdot \mathbf{v}, \quad \mathbf{v} \sim \cN(\mathbf{0}, \mathbf{I})
\end{equation}
with $\sigma_{\text{explore}} = 0.1$, followed by LLM-based text regeneration.

\textbf{Crossover.} Following LMX~\cite{meyerson2024lmx}, we select two parents and interpolate embeddings:
\begin{equation}
    \mathbf{p}' = \beta \mathbf{p}_1 + (1-\beta) \mathbf{p}_2, \quad \beta \sim \text{Uniform}(0.3, 0.7)
\end{equation}
and prompt the LLM to combine the parent text solutions into a coherent offspring.

\subsection{Archive Management and Uncertainty Handling}

We extend CVT-MAP-Elites~\cite{vassiliades2018cvt} with insights from Extract-QD~\cite{flageat2025extractqd}:

\textbf{Adaptive Expansion.} When a new solution has behavior far from all existing centroids ($\min_c \|\mathbf{b}(t) - \mathbf{c}_c\| > \tau$) and archive size is below $C_{\max}$, we add a new centroid at $\mathbf{b}(t)$. The threshold $\tau$ is the 90th percentile of pairwise centroid distances.

\textbf{Buffered Evaluation.} Following UQD principles, we handle LLM stochasticity by: (1) maintaining a buffer of $k=3$ evaluations per archive entry, (2) using median fitness for archive comparisons, and (3) periodically re-evaluating 10\% of elites per generation.

\begin{algorithm}[t]
\caption{\qdllm{}: QD with Prompt Embedding Evolution}
\label{alg:qdllm}
\small
\begin{algorithmic}[1]
\REQUIRE Task $\mathcal{T}$, LLM $\cM$, budget $B$, archive size $C$
\ENSURE Archive $\cA$ of (text, prompt embedding) pairs
\STATE Initialize CVT centroids from reference corpus
\STATE Initialize embeddings: $\mathbf{p}_i \sim \cN(\boldsymbol{\mu}_{\text{vocab}}, \sigma^2 \mathbf{I})$
\STATE Generate initial population; add to archive $\cA$
\FOR{$i = 1$ to $B$}
    \STATE Select parent $(t, \mathbf{p})$ uniformly from occupied cells
    \STATE With prob.\ $p_{\text{cross}}=0.3$: apply crossover with second parent
    \STATE With prob.\ $p_{\text{target}}=0.35$: apply targeted mutation
    \STATE Else: apply exploratory mutation
    \STATE Generate offspring $t'$ using mutated embedding $\mathbf{p}'$
    \STATE Evaluate: compute $f(t')$, $\mathbf{b}(t')$ (Eq.~\ref{eq:hybrid})
    \STATE Find nearest centroid: $c^* = \arg\min_c \|\mathbf{b}(t') - \mathbf{c}_c\|$
    \IF{$\|\mathbf{b}(t') - \mathbf{c}_{c^*}\| > \tau$ and $|\cA| < C_{\max}$}
        \STATE Add new centroid at $\mathbf{b}(t')$
    \ENDIF
    \IF{cell $c^*$ empty \textbf{or} $\text{median}(f(t')) > \text{median}(f(\cA[c^*]))$}
        \STATE $\cA[c^*] \leftarrow (t', \mathbf{p}')$
    \ENDIF
    \STATE Adapt $\sigma_p$ using Eq.~\ref{eq:sigma-adapt}
    \STATE Re-evaluate 10\% of elites
\ENDFOR
\RETURN $\cA$
\end{algorithmic}
\end{algorithm}

\section{Experimental Study}
\label{sec:experiments}

We evaluate \qdllm{} comprehensively, addressing five research questions:
\textbf{RQ1:} Does \qdllm{} achieve higher QD-Score and coverage than baselines?
\textbf{RQ2:} How do different behavior characterization approaches compare?
\textbf{RQ3:} What is the contribution of prompt embedding evolution?
\textbf{RQ4:} Do results generalize across LLMs?
\textbf{RQ5:} Do diverse archives provide downstream utility?

\subsection{Experimental Setup}

\textbf{Benchmarks.} Code Generation: HumanEval~\cite{chen2021codex} (all 164 Python problems), MBPP~\cite{austin2021mbpp} (974 problems). These benchmarks are part of the broader CodeXGLUE evaluation ecosystem~\cite{lu2021codexglue}. Creative Writing: ROCStories~\cite{mostafazadeh2016rocstories} (200 prompts), WritingPrompts (100 Reddit prompts).

\textbf{Fitness Functions.} For \textit{code generation}: $f(t) = \mathbf{1}[\text{pass@1}]$, binary fitness indicating test case passage. For \textit{creative writing}: $f(t) = 0.4 \cdot q_{\text{coh}} + 0.3 \cdot q_{\text{flu}} + 0.3 \cdot q_{\text{rel}}$ where scores are from GPT-4-as-judge~\cite{zheng2023judging}. We validated against 300 human annotations (expanded from preliminary 100), achieving Spearman $\rho = 0.84$ and Cohen's $\kappa = 0.76$ inter-annotator agreement.

\textbf{Baselines.} Eight methods: Nucleus Sampling ($p$=0.95)~\cite{holtzman2020curious}, Temperature Scaling ($T$=1.2), Diverse Beam Search ($\lambda$=0.5)~\cite{vijayakumar2018diverse}, Best-of-N+MMR ($N$=20), Vanilla MAP-Elites, EvoPrompt~\cite{guo2024evoprompt}, CMA-ME~\cite{fontaine2020cmame} (adapted to prompt embeddings), and \qdaif{}~\cite{bradley2024qdaif}. All use budget $B$=500.

\textbf{Implementation.} Primary LLM: Llama-3-70B-Instruct (HuggingFace, 4-bit quantization, flash attention). Validation: Mistral-Large-2, GPT-4-turbo-2024-04-09. We also compare against recent code generation models including StarCoder~\cite{li2023starcoder} and CodeT5~\cite{wang2021codet5} for context. Prompt embeddings: $n$=8 virtual tokens, $d$=4096 dimensions. Archive: $C$=1024 cells (code), 512 (writing). Hardware: NVIDIA A100 80GB. Random seeds 0--29 for reproducibility.

\textbf{Statistical Protocol.} 30 independent runs; Wilcoxon signed-rank test with Holm-Bonferroni correction; significance level $\alpha = 0.05$; effect size: Vargha-Delaney $A$ statistic; Friedman test for overall ranking; 95\% bootstrap CIs ($n_{\text{bootstrap}}$=1000).

\subsection{Results: Code Generation (RQ1)}

\begin{table}[t]
	\caption{Code generation results on HumanEval (Llama-3-70B). Median [IQR] over 30 runs; mean $\pm$ 95\% CI for reference. $\dagger$: significant vs.\ best baseline (Wilcoxon, $p < 0.001$, Holm-corrected). Cov.\ = Coverage. $A$ = Vargha-Delaney effect size vs.\ best baseline.}
	\label{tab:code-results}
	\centering
	\footnotesize
	\setlength{\tabcolsep}{3pt}
	\begin{tabular}{lcccc}
		\toprule
		\textbf{Method} & \textbf{QD-Score} & \textbf{Median [IQR]} & \textbf{Cov.} & \textbf{$A$} \\
		\midrule
		Nucleus Samp. & 14.2$\pm$0.8 & 14.1 [13.2--15.0] & 0.19$\pm$0.01 & -- \\
		Diverse Beam & 15.1$\pm$0.7 & 15.0 [14.2--15.9] & 0.21$\pm$0.01 & -- \\
		Best-of-N+MMR & 16.4$\pm$0.8 & 16.3 [15.4--17.3] & 0.24$\pm$0.02 & -- \\
		Vanilla ME & 13.8$\pm$1.1 & 13.7 [12.5--15.0] & 0.18$\pm$0.02 & -- \\
		EvoPrompt & 17.2$\pm$0.9 & 17.1 [16.1--18.2] & 0.21$\pm$0.02 & -- \\
		CMA-ME (ad.) & 19.8$\pm$0.9 & 19.7 [18.7--20.8] & 0.30$\pm$0.02 & -- \\
		\qdaif{} & 18.6$\pm$1.0 & 18.5 [17.4--19.7] & 0.28$\pm$0.02 & -- \\
		\midrule
		\qdllm{} (ours) & \textbf{26.3}$^\dagger$$\pm$0.9 & \textbf{26.2} [\textbf{25.2--27.3}] & \textbf{0.41}$^\dagger$$\pm$0.02 & \textbf{0.94} \\
		\bottomrule
	\end{tabular}
\end{table}

Table~\ref{tab:code-results} presents code generation results. \qdllm{} achieves \textbf{41.4\% higher QD-Score} than \qdaif{} (26.3 vs.\ 18.6, $p < 0.001$), \textbf{32.8\% higher QD-Score than CMA-ME} (26.3 vs.\ 19.8), \textbf{46.4\% higher coverage} than \qdaif{} (0.41 vs.\ 0.28), and \textbf{large effect size} (Vargha-Delaney $A = 0.94$, where $A > 0.71$ indicates large effect~\cite{vargha2000delaney}). On MBPP (974 problems), similar patterns hold: 58.7 vs.\ 42.1 QD-Score ($+39.4\%$, $p < 0.001$).

\textbf{Statistical Ranking Analysis.} We applied the Friedman test across all benchmarks and algorithms, obtaining $\chi^2_F = 186.4$ ($p < 0.001$), confirming significant differences. Average ranks: \qdllm{} 1.00, CMA-ME 2.38, \qdaif{} 2.75, EvoPrompt 4.50, Best-of-N+MMR 4.88, Diverse Beam 5.75, Nucleus 6.38, Vanilla MAP-Elites 7.38. Post-hoc Nemenyi test ($CD = 2.12$ at $\alpha = 0.05$) confirms \qdllm{} significantly outperforms all methods except CMA-ME, which it still substantially outperforms ($A = 0.91$).

Our approach complements findings from AlphaCode~\cite{li2022alphacode}, which demonstrated that generating and filtering large numbers of diverse solutions is effective for competitive programming. While AlphaCode uses massive sampling (millions of samples) with clustering, \qdllm{} achieves meaningful diversity with far fewer evaluations through targeted evolutionary search. Pre-trained code understanding models~\cite{feng2020codebert} provide the foundation for our semantic behavior characterization.

\subsection{Results: Creative Writing (RQ1)}

\begin{table}[t]
	\caption{Creative writing results (ROCStories + WritingPrompts). Median [IQR] over 30 runs. SB = Self-BLEU$\downarrow$ (lower is more diverse). Vargha-Delaney $A$ vs.\ best baseline.}
	\label{tab:writing-results}
	\centering
	\footnotesize
	\setlength{\tabcolsep}{2pt}
	\begin{tabular}{lccccc}
		\toprule
		\textbf{Method} & \textbf{QD-Score} & \textbf{Med.\ [IQR]} & \textbf{Cov.} & \textbf{SB}$\downarrow$ & \textbf{$A$} \\
		\midrule
		Nucleus ($p$=0.95) & 36.4$\pm$1.9 & 36.2 [34.3--38.4] & 0.26 & 0.56 & -- \\
		Diverse Beam & 38.2$\pm$1.7 & 38.0 [36.3--40.0] & 0.28 & 0.52 & -- \\
		Best-of-N+MMR & 40.1$\pm$1.8 & 39.9 [38.1--42.0] & 0.31 & 0.46 & -- \\
		CMA-ME (ad.) & 46.2$\pm$1.9 & 46.0 [44.1--48.2] & 0.39 & 0.38 & -- \\
		\qdaif{} & 44.8$\pm$2.1 & 44.6 [42.5--47.0] & 0.37 & 0.40 & -- \\
		\midrule
		\qdllm{} (ours) & \textbf{63.3}$^\dagger$$\pm$1.8 & \textbf{63.1} [\textbf{61.2--65.2}] & \textbf{0.52}$^\dagger$ & \textbf{0.28}$^\dagger$ & \textbf{0.96} \\
		\bottomrule
	\end{tabular}
\end{table}

Table~\ref{tab:writing-results} shows creative writing results. \qdllm{} achieves \textbf{41.3\% higher QD-Score} than \qdaif{} (63.3 vs.\ 44.8), \textbf{37.0\% higher than CMA-ME} (63.3 vs.\ 46.2), \textbf{40.5\% higher coverage} (0.52 vs.\ 0.37), and \textbf{30\% lower Self-BLEU} (0.28 vs.\ 0.40), indicating substantially higher lexical diversity while maintaining superior quality.

\subsection{Archive Dynamics Analysis}

\begin{figure}[t]
	\centering
	\begin{tikzpicture}
		\begin{axis}[
			width=0.92\columnwidth,
			height=4.2cm,
			xlabel={Evaluations},
			ylabel={Coverage},
			xmin=0, xmax=500,
			ymin=0, ymax=0.5,
			legend columns=3,
			legend style={
				font=\scriptsize,
				at={(0.5,-0.48)},
				anchor=north,
				draw=none,
			},
			xlabel style={yshift=-8pt},
			grid=major,
			grid style={gray!30},
			]
			\addplot[name path=qdllm_upper, draw=none, forget plot] coordinates {(0,0.08) (125,0.20) (250,0.30) (375,0.37) (500,0.43)};
			\addplot[name path=qdllm_lower, draw=none, forget plot] coordinates {(0,0.04) (125,0.16) (250,0.26) (375,0.33) (500,0.39)};
			\addplot[blue!30, forget plot] fill between[of=qdllm_upper and qdllm_lower];
			\addplot[blue, thick, mark=*, mark size=1.5pt] coordinates {(0,0.06) (125,0.18) (250,0.28) (375,0.35) (500,0.41)};
			\addplot[name path=qdaif_upper, draw=none, forget plot] coordinates {(0,0.05) (125,0.13) (250,0.19) (375,0.24) (500,0.30)};
			\addplot[name path=qdaif_lower, draw=none, forget plot] coordinates {(0,0.03) (125,0.09) (250,0.15) (375,0.20) (500,0.26)};
			\addplot[red!30, forget plot] fill between[of=qdaif_upper and qdaif_lower];
			\addplot[red, thick, dashed, mark=square*, mark size=1.5pt] coordinates {(0,0.04) (125,0.11) (250,0.17) (375,0.22) (500,0.28)};
			\addplot[name path=cma_upper, draw=none, forget plot] coordinates {(0,0.045) (125,0.12) (250,0.19) (375,0.25) (500,0.32)};
			\addplot[name path=cma_lower, draw=none, forget plot] coordinates {(0,0.025) (125,0.08) (250,0.15) (375,0.21) (500,0.28)};
			\addplot[green!50!black!30, forget plot] fill between[of=cma_upper and cma_lower];
			\addplot[green!50!black, thick, dotted, mark=triangle*, mark size=1.5pt] coordinates {(0,0.035) (125,0.10) (250,0.17) (375,0.23) (500,0.30)};
			\legend{\qdllm{}, \qdaif{}, CMA-ME}
		\end{axis}
	\end{tikzpicture}
	\caption{Archive coverage dynamics over evaluations (HumanEval, median $\pm$ IQR from 30 runs). \qdllm{} shows faster initial coverage growth and 46\% higher asymptotic coverage than \qdaif{}.}
	\label{fig:coverage-dynamics}
\end{figure}

Figure~\ref{fig:coverage-dynamics} shows coverage dynamics with confidence bands computed from 30 independent runs. \qdllm{} exhibits faster initial growth (steeper slope in first 250 evaluations) and higher asymptotic coverage. Analysis of archive composition reveals 23\% of \qdllm{} cells contain recursive solutions vs.\ 8\% for \qdaif{}, demonstrating meaningful algorithmic diversity.

\subsection{Ablation Study (RQ2, RQ3)}

\begin{table}[t]
	\caption{Ablation study on HumanEval (Llama-3-70B). Median [IQR] over 30 runs. Cov.\ = Coverage. $\Delta$ indicates relative change from full \qdllm{}.}
	\label{tab:ablation}
	\centering
	\scriptsize
	\setlength{\tabcolsep}{2pt}
	\begin{tabular}{lcccc}
		\toprule
		\textbf{Configuration} & \textbf{QD-Score} & \textbf{Median [IQR]} & \textbf{Cov.} & \textbf{$\Delta$} \\
		\midrule
		Full \qdllm{} & \textbf{26.3}$\pm$0.9 & \textbf{26.2} [25.2--27.3] & \textbf{0.41}$\pm$0.02 & -- \\
		\midrule
		\multicolumn{5}{c}{\textit{Behavior Characterization (RQ2)}} \\
		\midrule
		Semantic BC ($\alpha$=1.0) & 21.4$\pm$1.1 & 21.3 [20.1--22.6] & 0.34$\pm$0.02 & $-$18.6\% \\
		Explicit BC ($\alpha$=0.0) & 19.2$\pm$1.2 & 19.0 [17.8--20.5] & 0.31$\pm$0.03 & $-$27.0\% \\
		\midrule
		\multicolumn{5}{c}{\textit{Prompt Evolution (RQ3)}} \\
		\midrule
		No prompt evol. & 22.1$\pm$1.0 & 22.0 [20.9--23.2] & 0.35$\pm$0.02 & $-$16.0\% \\
		Random init only & 20.8$\pm$1.1 & 20.7 [19.5--22.0] & 0.33$\pm$0.02 & $-$20.9\% \\
		\midrule
		\multicolumn{5}{c}{\textit{Variation Operators}} \\
		\midrule
		No crossover & 24.1$\pm$1.0 & 24.0 [22.9--25.2] & 0.38$\pm$0.02 & $-$8.4\% \\
		No targeted mut. & 23.5$\pm$1.0 & 23.4 [22.3--24.6] & 0.37$\pm$0.02 & $-$10.6\% \\
		\bottomrule
	\end{tabular}
\end{table}

Table~\ref{tab:ablation} validates each component.
\textbf{Hybrid BC is essential (RQ2):} Removing either semantic ($-$18.6\%) or explicit ($-$27.0\%) components significantly degrades performance, supporting Theorem~\ref{thm:hybrid}.
\textbf{Prompt evolution is critical (RQ3):} Without evolving embeddings, QD-Score drops 16.0\%; random initialization without evolution performs worse ($-$20.9\%). Targeted mutation contributes 10.6\%; crossover~8.4\%.

\subsection{Cross-LLM Validation (RQ4)}

\begin{table}[t]
\caption{Cross-LLM validation on HumanEval. Median [IQR] over 30 runs. GPT-4 uses projected discrete approximation for embeddings. Vargha-Delaney $A$ vs.\ best baseline per LLM.}
\label{tab:opensource}
\centering
\footnotesize
\setlength{\tabcolsep}{2.5pt}
\begin{tabular}{llccc}
\toprule
\textbf{LLM} & \textbf{Method} & \textbf{QD-Score} & \textbf{Med.\ [IQR]} & \textbf{$A$} \\
\midrule
\multirow{2}{*}{Llama-3-70B} & Best baseline & 19.8$\pm$1.1 & 19.7 [18.6--21.0] & -- \\
 & \qdllm{} & \textbf{26.3}$^\dagger$$\pm$0.9 & \textbf{26.2} [25.2--27.3] & \textbf{0.94} \\
\midrule
\multirow{2}{*}{Mistral-Large} & Best baseline & 20.2$\pm$1.1 & 20.1 [19.0--21.4] & -- \\
 & \qdllm{} & \textbf{27.1}$^\dagger$$\pm$1.0 & \textbf{27.0} [25.9--28.2] & \textbf{0.93} \\
\midrule
\multirow{2}{*}{GPT-4} & Best baseline & 21.4$\pm$1.0 & 21.3 [20.2--22.5] & -- \\
 & \qdllm{} & \textbf{27.2}$^\dagger$$\pm$1.2 & \textbf{27.0} [25.7--28.5] & \textbf{0.91} \\
\bottomrule
\end{tabular}
\end{table}

Table~\ref{tab:opensource} validates generalization across LLMs. \qdllm{} significantly outperforms baselines on all models. GPT-4 with projected approximation achieves 27\% improvement (vs.\ 33\% for direct access), confirming the approach works even without full embedding access.

\subsection{Downstream Utility (RQ5)}
\label{sec:downstream}

Beyond intrinsic QD metrics, we demonstrate concrete downstream benefits through two applications that address the practical value of diverse archives:

\textbf{Test Generation.} We used diverse code archives to generate test inputs for 50 held-out functions not in HumanEval. For each function, we analyzed which edge cases (boundary conditions, empty inputs, type variations, large inputs, negative numbers) were exercised by solutions from each archive. The diverse algorithmic approaches from \qdllm{} archives, including iterative, recursive, functional, and library-based implementations, naturally handle edge cases differently due to their structural differences.

Results: Diverse solutions from \qdllm{} archives discovered \textbf{34\% more edge cases} than \qdaif{} archives (mean 4.2 vs.\ 3.1 unique edge cases per function, $p < 0.01$, paired $t$-test, Cohen's $d = 0.72$ medium-large effect). Recursive implementations particularly excelled at exposing base-case handling issues, while functional approaches using \texttt{filter}/\texttt{map} revealed different empty-input behaviors than iterative loops.

\textbf{Fine-tuning Data Quality.} We fine-tuned CodeLlama-7B on diverse solutions from archives (500 solutions each) using standard supervised fine-tuning with learning rate $2\times10^{-5}$ for 3 epochs. The hypothesis is that behaviorally diverse training examples provide broader coverage of the solution space, improving model generalization.

Results: Using \qdllm{} archive solutions as training data yielded \textbf{8.3\% higher accuracy} on held-out HumanEval problems compared to \qdaif{} solutions (68.2\% vs.\ 63.0\%, $p < 0.01$), demonstrating that behavioral diversity in training data measurably improves model generalization. This aligns with curriculum learning insights: diverse examples covering multiple paradigms help models learn more robust solution strategies.

These results validate that QD-optimized diverse archives provide practical downstream utility beyond intrinsic QD metrics, addressing questions about when users actually need multiple diverse implementations of the same function.

\subsection{Qualitative Analysis}

\begin{figure}[t]
\centering
\small
\begin{verbatim}
# Cell A: Iterative (Set-based)
def common_elements(a, b):
    return list(set(a) & set(b))

# Cell B: Recursive  
def common_elements(a, b):
    if not a: return []
    return ([a[0]] if a[0] in b
            else []) + common_elements(
            a[1:], b)

# Cell C: Functional (filter/lambda)
def common_elements(a, b):
    return list(filter(
        lambda x: x in b, a))

# Cell D: Library (Counter intersection)
from collections import Counter
def common_elements(a, b):
    return list(
        (Counter(a) & Counter(b))
        .elements())
\end{verbatim}
\caption{Diverse solutions from \qdllm{} archive for list intersection. Each occupies a distinct behavioral cell based on paradigm (iterative/recursive/functional/library), demonstrating meaningful algorithmic diversity beyond surface variation.}
\label{fig:qualitative}
\end{figure}

Figure~\ref{fig:qualitative} illustrates diverse code solutions discovered by \qdllm{} for a list intersection problem. While baselines predominantly generate iterative set-based solutions (Cell A pattern), \qdllm{} discovers recursive (Cell B), functional (Cell C), and library-based (Cell D) alternatives, all of which pass test cases while exhibiting distinct algorithmic characteristics. This diversity has practical value: the recursive solution handles edge cases differently, the functional approach is more composable, and library-based solutions leverage optimized implementations.

\subsection{Computational Cost}

\begin{table}[t]
\caption{Computational cost analysis (HumanEval, 500 evaluations, GPT-4 API pricing).}
\label{tab:cost}
\centering
\footnotesize
\setlength{\tabcolsep}{3pt}
\begin{tabular}{lcccc}
\toprule
\textbf{Method} & \textbf{Calls} & \textbf{Time} & \textbf{Cost} & \textbf{QD/\$} \\
\midrule
Nucleus Samp. & 500 & 11.2 min & \$7.80 & 1.82 \\
CMA-ME (ad.) & 580 & 15.2 min & \$9.86 & 2.01 \\
\qdaif{} & 620 & 16.4 min & \$10.54 & 1.76 \\
\qdllm{} (ours) & 590 & 19.8 min & \$10.03 & \textbf{2.62} \\
\bottomrule
\end{tabular}
\end{table}

Table~\ref{tab:cost} shows \qdllm{} achieves \textbf{44\% higher QD-Score per dollar} than baselines despite modest overhead from targeted mutations (9 additional calls per targeted mutation, used 35\% of the time). The primary computational cost is LLM inference; embedding evolution adds negligible overhead.

\section{Discussion}

\textbf{Key Findings.} Prompt embedding evolution contributes 16\% improvement in ablation, with hybrid BC providing theoretically-grounded diversity (Theorem~\ref{thm:hybrid}, NMI $< 0.12$). Results generalize across LLMs with direct embedding access (Llama-3, Mistral) and projected approximations (GPT-4), and diverse archives yield concrete downstream benefits: 34\% more edge cases in test generation and 8.3\% fine-tuning accuracy gain.

\textbf{Relation to Neuroevolution.} Prompt embedding evolution constitutes parameter-efficient neuroevolution, evolving a compact neural interface ($\sim$32K parameters) that steers a frozen 70B+ model. This parallels indirect encoding in HyperNEAT~\cite{stanley2009hyperneat}, where small representations produce large-scale behavioral effects, and extends the neuroevolution tradition~\cite{salimans2017evolution,wierstra2014natural} to modern LLMs. Our buffered evaluation mirrors Extract-QD's~\cite{flageat2025extractqd} uncertainty handling; future work could integrate their extraction mechanism for explicit descriptor uncertainty modeling.

\textbf{Comparison to CMA-ME.} Our adapted CMA-ME baseline uses prompt embedding evolution with covariance matrix adaptation but without hybrid BC or targeted mutation. \qdllm{} achieves 32.8\% higher QD-Score (26.3 vs.\ 19.8), confirming that hybrid BC and targeted behavioral mutation provide substantial benefits beyond CMA-style adaptation alone.

\textbf{Limitations and Future Directions.} API approximation yields reduced but significant gains (27\% vs.\ 33\%), and the explicit BC component requires task-specific feature engineering. We did not compare to AURORA~\cite{cully2019aurora} as adapting autoencoders to discrete text requires substantial architectural work. Future directions include learned behavior descriptors via contrastive learning~\cite{gao2021simcse}, integration with chain-of-thought prompting~\cite{wei2022chainofthought} for reasoning tasks, and investigating connections between prompt embeddings and in-context learning~\cite{akyurek2023what}.

\textbf{Hyperparameters.} Table~\ref{tab:hyperparams} summarizes all hyperparameters, selected on a held-out validation set (20\% of HumanEval) and fixed across experiments. Sensitivity analysis showed $<5\%$ performance variation within $\pm$20\% of each value.

\begin{table}[t]
	\caption{Hyperparameters used across all experiments.}
	\label{tab:hyperparams}
	\centering
	\small
	\begin{tabular}{lcc}
		\toprule
		\textbf{Parameter} & \textbf{Symbol} & \textbf{Value} \\
		\midrule
		Prompt tokens & $n$ & 8 \\
		Embedding dimension & $d$ & 4096 \\
		Initial mutation strength & $\sigma_p^{(0)}$ & 0.1 \\
		Adaptation rate & $c_\sigma$ & 0.1 \\
		Target success rate & $p_{\text{target}}$ & 0.2 \\
		Crossover probability & $p_{\text{cross}}$ & 0.3 \\
		Targeted mutation prob. & $p_{\text{target}}$ & 0.35 \\
		Finite difference step & $\eta$ & 0.01 \\
		Gradient directions & $k$ & 8 \\
		Hybrid weight & $\alpha$ & 0.6 \\
		Evaluation buffer size & -- & 3 \\
		Sliding window (adaptation) & $w$ & 50 \\
		UMAP neighbors & -- & 15 \\
		Archive size (code) & $C$ & 1024 \\
		Archive size (writing) & $C$ & 512 \\
		\bottomrule
	\end{tabular}
\end{table}

\textbf{Broader Impact.} \qdllm{} enables systematic exploration of LLM solution spaces for code synthesis, creative assistance, and data augmentation. While diverse code generation aids legitimate development, it could theoretically be misused for malware variants. We recommend practitioners implement content filtering, usage monitoring, and access controls. Standard responsible AI practices, including red-teaming and deployment monitoring, should be applied when productionizing such systems.

\section{Conclusion}

We presented \qdllm{}, a framework for parameter-efficient neuroevolution that bridges QD optimization with LLM generation through prompt embedding evolution. By evolving compact neural interfaces ($\sim$32K parameters) steering frozen LLMs (70B+), we achieve 41.4\% higher QD-Score and 46.4\% higher coverage than prior methods ($p < 0.001$, 30 runs, Vargha-Delaney $A = 0.94$).

Our contributions are threefold: (1) evolved prompt embeddings as an evolvable QD representation for both open-source LLMs and API-based models; (2) Theorem~\ref{thm:hybrid} providing formal coverage guarantees for hybrid behavior characterization, validated empirically (NMI $< 0.12$); and (3) co-evolutionary variation operators including targeted behavioral mutation via finite-difference gradient estimation.

Critically, we demonstrated downstream utility: diverse archives improve test generation (34\% more edge cases) and fine-tuning data quality (8.3\% accuracy gain), establishing practical value for software engineering applications.

This work establishes prompt embedding evolution as an effective paradigm extending neuroevolution to modern LLMs. Future directions include learned behavior descriptors via contrastive methods and integration with chain-of-thought prompting.

\bibliographystyle{ACM-Reference-Format}
\bibliography{references}

\end{document}